\begin{document}


\title{An Efficient Drifters Deployment Strategy to Evaluate Water Current Velocity Fields}


\author{Murad~Tukan,\  Eli~Biton, \ Roee~Diamant,~\IEEEmembership{Senior~Member,~IEEE}
\thanks{M.~Tukan(corresponding author mtukan@campus.haifa.ac.il) and R.~Diamant are with the Department of Marine Technologies, University of Haifa, 3498838 Haifa, Israel.}
\thanks{E.~Biton is with the Dept. of physical oceanography, Israel Oceanographic and Limnological Research, 3109701 Haifa, Israel.}
\thanks{This work was sponsored in part by the MOST-BMBF German-Israeli Cooperation in Marine Sciences 2018-2020 (Grant \# 3-16573), by the MOST action for Agriculture, Environment, and Water for the year 2019 (Grant \# 3-16728), and by a grant from the University of Haifa’s Data Science Research Center. This work has been submitted to the IEEE for possible publication.
Copyright may be transferred without notice, after which this version may
no longer be accessible.}}

\maketitle
\begin{abstract}
Water current prediction is essential for understanding ecosystems, and to shed light on the role of the ocean in the global climate context. Solutions vary from physical modeling, and long-term observations, to short-term measurements. In this paper, we consider a common approach for water current prediction that uses Lagrangian floaters for water current prediction by interpolating the trajectory of the elements to reflect the velocity field. Here, an important aspect that has not been addressed before is where to initially deploy the drifting elements such that the acquired velocity field would efficiently represent the water current. To that end, we use a clustering approach that relies on a physical model of the velocity field. Our method segments the modeled map and determines the deployment locations as those that will lead the floaters to 'visit' the center of the different segments. This way, we validate that the area covered by the floaters will capture the in-homogeneously in the velocity field. Exploration over a dataset of velocity field maps that span over a year demonstrates the applicability of our approach, and shows a considerable improvement over the common approach of uniformly randomly choosing the initial deployment sites. Finally, our implementation code can be found in~\cite{opencode}.
\end{abstract}

\begin{IEEEkeywords}
Water currents, Positioning, Data processing, Coresets, Lagrangian floaters.
\end{IEEEkeywords}

\section{Introduction}
\label{sec:intro}

Knowledge and information about the ocean’s flow is highly applicable to scientific purposes such as climate change, global heat distribution, air-sea interactions, eddy formations, convection, tides, biological productivity, to name a few. Prediction of the water current is also required for operational needs such as marine conservation, search and rescue, the fishing industry, navigation, the development of marine infrastructure, tracking oil-spill distribution, tsunami warnings, and renewable energy. The study of the complex oceanic flow field variability - characterized by a wide range of spatial-temporal scales of processes - demands the combination of physical models and observations. In particular, the latter can be used to calibrate or to validate the model's parameters, as well as to serve as a database for statistical
evaluation.

Existing systems for directly measuring the water current (WC) mostly involve current profiling at a fixed deployment position, such as acoustic Doppler current profiles (e.g., ADCPs), or cover extensive areas, but only of the sea surface (e.g., HF radar and satellite elevation data). The data collected is used as an input for analytical and numerical models in a data assimilation fashion~\cite{kuznetsov2003method,miller2007numerical,santoki2011assimilation}. These models rely on local environmental information such as temperature, wind velocity, bathythermy and bathytermic, and need to be calibrated~\cite{castellari2001prediction,carrier2014impact} Another solution is to use Lagrangian floaters to in-situ evaluate the velocity field for data assimilation. Recently,~\cite{diamant2020prediction} has developed a methodology to estimate the 3D flow field, based on the tracking of the trajectories of surface or submerged floaters. The method has been successfully implemented to restore/complete data gaps in a flow field. 
Other works have shown that the accuracy of the estimated flow field depends not only on the number of floats, but also on the locations of their initial deployments~\cite{molcard2003assimilation}. 
As such, it is necessary to develop sophisticated methods for optimal dispersion of the floaters. Clearly, this optimal setup should be related to the flow’s characteristics. 

In this work, we develop a scheme how to plan ahead the initial positioning of Lagrangian floaters, such as to improve flow field reconstruction~\cite{diamant2020prediction, callaham2019robust}. 

Our scheme is useful as a perpetration for in-situ calibration of a given water current flow field model. We analyze a given flow field
map that is generated by a physical model to plan where to initially deploy a fixed number of floats. After executing our deployment planning scheme, the floats are released at the proposed locations and move freely with the water current for a fixed time frame, while their locations is tracked by e.g., acoustic positioning~\cite{jaffe2017swarm}. After this in-situ operation, the trajectories of the
floats are used to generate a flow field using e.g., \cite{diamant2020prediction} or to validate or calibrate the given physical model. In the above described process, as illustrated good coverage of the flow field by the floats is required. This is illustrated in the two examples in Fig.~\ref{fig:real-time}. Here, we include a map of the WC velocity field's magnitude and heading by the length and direction of the dark arrows. We observe homogeneous areas in the WC by small variations of the arrows and areas of a complex WC structure by a large diversity in the size and direction of the arrows. A group of 3 floats marked in orange lines are deployed in homogeneous sections of the flow field and thus do not capture the complexity of the water current, whereas a group of floats marked in green lines is well distributed to pass through all
diverse sections in the flow field map. Clearly, the difference between the two groups is by their initial position. Our approach applies machine learning tools over the given flow field map. Particularly, inspired by computational geometry, we turn to coresets for the task of planning the initial location of the floaters. Informally speaking, given input data, a coreset is a weighted subset of the original data that approximates the original data in some provable sense with respect to a (usually infinite) set of queries or models and an objective loss/cost function~\cite{tukan2020coresets}. We use coresets to find those \say{regions of interest} where the floaters should visit in order to most efficiently represent the WC structure. The method is tested on a finite resolution circulation model results that span over a year. Results show that using our algorithm, the floats better capture the complex structure of the WC, compared to the case of randomly deploying the floaters.

\begin{figure}[htb!]
    \centering
    \includegraphics[width=\linewidth]{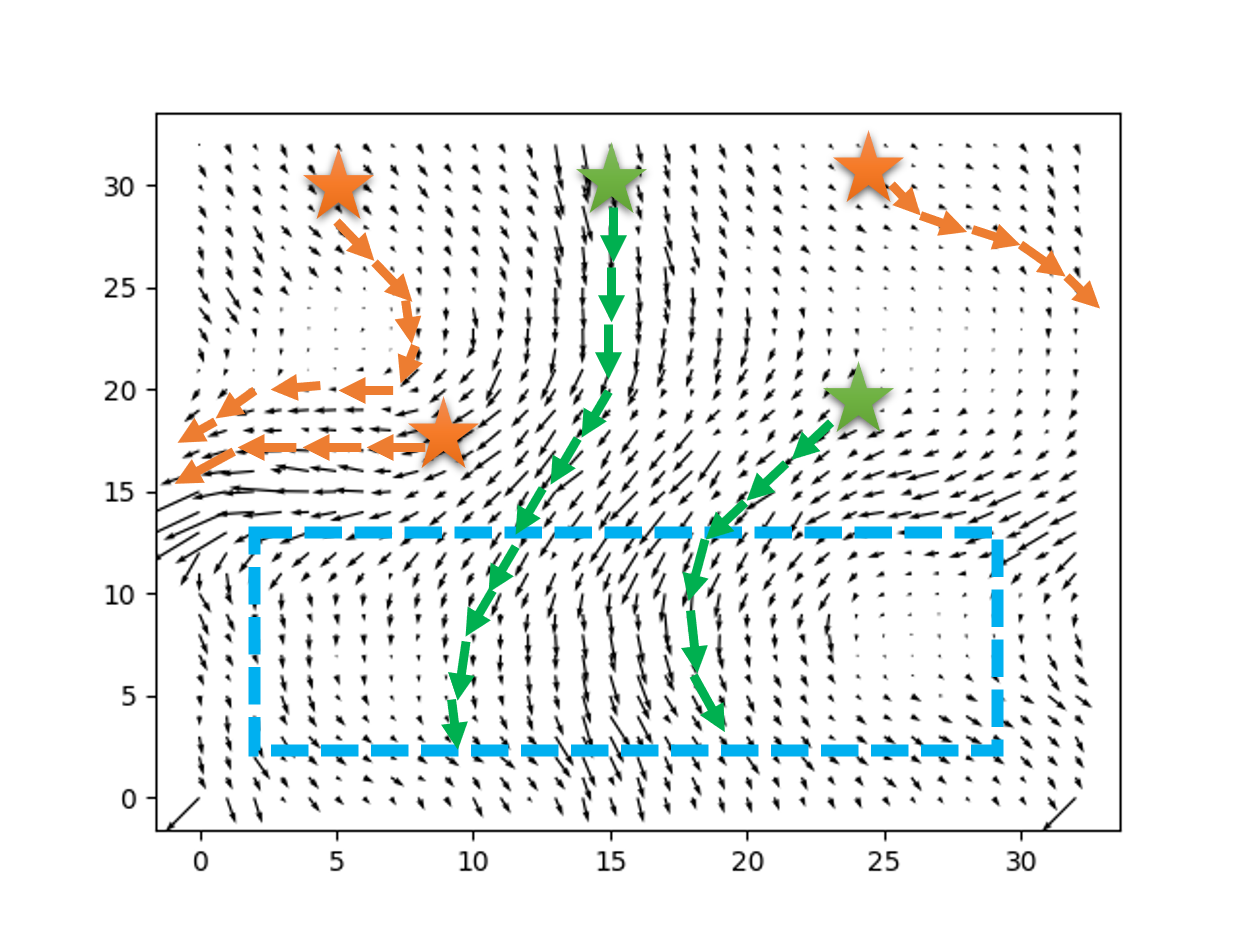}
    \caption{The setting of a real-time sea experiment. This map was generated using SHYFEM~\cite{umgiesser2004finite} (System of Hydrodynamic Finite Element Modules) by setting different variables, e.g., bathymetry, wind velocity, wind direction, etc. The direction of any arrow describes the direction of the WC at that discrete position $p$, while the length of the arrow represents the speed of the WC at $p$.}
    \label{fig:real-time}
\end{figure}

To the best of our knowledge, our approach is the first to consider the problem of optimal deployment of floats for the task of WC prediction and is the first to use coresets for oceanographic applications. Our contribution is threefold:
\begin{enumerate}[label=(\roman*)]
    \item A coreset-based solution for WC segmentation. A novel partitioning of the WC velocity field into segments of homogenous WC.
    \item  A clustering scheme for the segmentation of the WC's flow field. Primarily, reducing the problem of WC clustering into an instance of sets clustering. 
    \item A sub-optimal solution to determine the deployment location of floaters for WC's estimation. A graph theory-based approach to plan the deployment location of floaters such that the floaters explore the different clusters of the WC.
\end{enumerate}

The paper is organized as follows. Related work is discussed in Section~\ref{sec:related}. System setup and preliminaries are given in Section~\ref{sec:system}. In Section~\ref{sec:method}, we discuss the methodology of our proposed approach. Section~\ref{sec:exp} presents the numerical and experimental results, and conclusions are drawn in Section~\ref{sec:conclusions}.

\section{Related Work}
\label{sec:related}
Constructing WC flow fields based on Lagrangian particle trajectories has been gaining increasing attention over recent years~\cite{salman2008using,chapman2017can}. The main difference between the available approaches lies on the formalization of the relationships between the floaters' trajectories. In~\cite{bai2018motion}, a model for the flow field is used to find such a connection. Another solution is offered in~\cite{diamant2020prediction}, where the relations between drifter trajectories are calculated by statistical models. In~\cite{shi2021cooperative}, a cooperative solution is adopted to recover the flow field by formulating the integration error. Specifically, the motion-integration errors of multiple autonomous underwater vehicles (AUVs) in a 2D flow are obtained. The relation between the flow model and motion-integration errors is then formulated as a system of nonlinear equations followed by an iterative algorithm that is designed to estimate the flow field. While the above techniques for data assimilation are able to merge measurements with a model to estimate the WC, as shown in~\cite{rafiee2009kalman}, the results are sensitive to the initial deployment of the drifters. More specifically, an overly-close deployment would not capture the spatial dependency of the velocity field, while a too-far deployment, even below the Rossby radius of deformation, may break the assumed correlation between the sensors' drifting velocity.

A key challenge in determining the floaters' deployment locations is to spread the sensors across diverse sections of the explored area. One option is to allow maneuvering such that floaters can escape areas of homogeneous velocity field~\cite{hollinger2013sampling}. Another option is to direct the initial floater positions along the out-flowing branch of Lagrangian boundaries for better relative dispersion of floaters~\cite{molcard2006directed}. 
In contrast, in~\cite{hansen2018coverage} a sequential protocol was employed, where floaters are deployed one after the other, and their deployment locations are based on the trajectory obtained by the previously deployed floaters. First, the flow field is estimated using Gaussian processes (GP)~\cite{kim2011gaussian}, followed by trajectory estimation using OpenDrift~\cite{dagestad2018opendrift}. The estimated trajectories are then ranked to find the next deployment location with the aim of obtaining a longer, unexplored, trajectory. The process then repeats until all floaters are deployed. While this method holds potential for exploring non-homogeneous patches in the velocity field, its optimality can only be reached when a large number of floats are in use. Further, the method makes perhaps a too hard assumption that the WC is stable throughout a long enough observation window to deploy and recover the floats one by one.

While the models above are comparative to our work, they either assumed that (i) the moving agents have the ability to maneuver their own path, (ii) a large number of floaters is available to ensure good quality, (iii) or the velocity field is stable throughout a long enough observation window. These assumptions may be too hard in practical cases where the WC is time-varying, and when the number of floaters is limited. For these cases, we present an alternative solution.

\section{System model}
\label{sec:system}
\subsection{Setup details}
Consider a set of $K$ submerged floaters $\mathcal{X} = \br{1,\cdots,K}$, each of which drifts with the WC for a time frame of $T$ seconds. During their operations, the floaters' locations are known - either through a self-navigation process or using acoustic localization~\cite{tan2011survey}. The analysis is performed over a given time instance, $0 < t < T$, that can be configured according to the expected time it takes a float to cover the given area for exploration. In this time frame, each of the floaters in $\mathcal{X}$ measures the velocity of the WC. This can be done directly, using sensors like Doppler velocity loggers; indirectly, using the time-varying position of the floaters; or by simple periodic surfacing to obtain GPS fixes as performed for the Argo floaters~\cite{alexandri2021time}. Assuming for simplicity, that at time instance $t$ only one of the floaters measures the WC, we construct vector $\mathbf{p}(t) = \left[p_x(t), p_y(t), p_z(t), t\right]$ for the $x$ and $y$ UTM coordinates of the floater, its depth $p_z(t)$ in meters, and the observation’s time instance, $t$, respectively. Similarly, we obtain vector $\mathbf{v}(t) = \left[v_x(t), v_y(t), v_z(t)\right]$ representing the WC’s speed at the $x$, $y$, and $z$ directions, respectively. 

We consider two scenarios: \begin{enumerate*}[label=\arabic*)]
    \item the floaters are recovered after $T$ seconds and the prediction of the WC’s velocity field is performed offline, and
    \item the prediction is performed online based on past WC velocity observations, in which case the operation must involve communicating between the floaters.
\end{enumerate*}
The first scenario mostly applies to the validation of a WC model, while the second can assist in the path planning of a submerged vessel. In this work, we are interested in determining the initial deployment position, $\mathbf{p}(0)$, of the floaters such that the WC is best predicted. To this end, we rely on our previously developed technique~\cite{diamant2020prediction} as a utility metric to evaluate the WC's velocity field from the floaters' trajectories.

\subsection{Assumptions}
We make the following assumptions. First, the floaters are assumed to be Lagrangian, such that their motion is completely attributed to the WC. We also assume the existence of a WC model that provides velocity predictions in a two-dimensional plane for the explored area. The spatial resolution of the given model is fixed, and the accuracy of our approach is directly related to the model's accuracy.

The interesting case that we are aiming for is a non-homogeneous WC with patches of homogeneity, such that a number of floaters are needed in order to well explore the WC's velocity field. These patches are assumed to change slowly in space, such that their borders are smooth and form convex sets. An example of such a velocity field is presented in Fig.~\ref{fig:toy_input} with arrows representing the magnitude and direction of the WC in a single cell in space. To simulate the floaters' motion within the modeled WC, the trajectories of the floaters can follow these arrows. The example in Fig.~\ref{fig:toy} shows such motion for a set of two floaters. We observe differences in the trajectory of the simulated floaters, which reflects the non-homogeneity of the WC.


\subsection{Preliminaries}
\paragraph{Notations} For integers $n$ and $d \geq  2$, we denote by $[n]$ the set $\br{1, \cdots , n}$, by $\REAL^{n \times d}$ the union over every $n \times d$ real matrix, and by $I_d \in \REAL^{d \times d}$ the identity matrix. A matrix $A \in \REAL^{d \times d}$ is said to be  \begin{enumerate*}[label=(\roman*)]
    \item an orthogonal if and only if $A^T A = A A^T = I_d$, or
    \item a positive definite matrix if and only if for every column vector $x \neq 0_d$, $x^T A x > 0$.
\end{enumerate*}
For every set $A \subseteq \REAL^d$, we denote by $\abs{A}$ the number of elements of $A$. Finally, throughout the paper, vectors are addressed as column vectors.

\subsubsection{Volume approximation}
In what follows, we define what is known as the L\"{o}wner ellipsoid, a tool that will aid us in obtaining an $\eps$-coreset in the context of volume approximation.

\begin{definition}[Theorem III,~\cite{john2014extremum}]
Let $L \subseteq \REAL^d$ be a set of points. Let $c \in \REAL^d$ be a vector, and let $G \in \REAL^{d \times d}$ be a positive definite matrix. We say that ellipsoid $E = \br{x \in \REAL^d \middle| \term{x-c}^T G \term{x - c} \leq 1}$, is an \textit{MVEE} (short for the Minimum Volume Enclosing Ellipsoid) of $L$ if 
\begin{equation}
\frac{1}{d} \term{E - c} + c \subseteq \conv{L} \subseteq E,    
\end{equation}
where $E - c$ denotes the set $\br{x - c \middle| x \in E}$, $\frac{1}{d} E$ denotes the set $\br{\frac{1}{d} x \middle| x \in E}$, and $\conv{L}$ denotes the convex hull of $L$.
\end{definition}

\begin{definition}[Similar to that of \cite{todd2007khachiyan}]
\label{def:lownerEllip}
Let $\eps > 0$ be an approximation factor, and let $X \subseteq \REAL^{d}$ be a set of points. The set $S \subseteq X$ is defined to be an $\eps$-coreset for the \textit{MVEE} of $X$, if  
\begin{equation}
\vol{\mvee{X}} \leq \term{1 + \eps} \vol{\mvee{S}},    
\end{equation}
where $\vol{A}$ denotes the volume of $A$ and $\mvee{A}$ denotes the \textit{MVEE} of $A$, for any $A \subseteq \REAL^d$.
\end{definition}

\subsubsection{Clustering of sets}
To determine the initial location of the floaters, we first need to perform clustering to group together sets of similar WC. Thus, the following definitions will be used for the task of clustering.

\begin{definition}[Variant of Definition 2.3,\cite{jubran2020sets}]
\label{def:nmset}
Let $m,n,d$ be a triplet of positive integers. An $m$-set $P$ is a set of $m$ distinct points in $\REAL^d$. An $(m,n)$-set is a set $\mathcal{P} = \br{P \middle| P \subseteq \REAL^d, \abs{P} = m}$ such that $\abs{P} = n$.
\end{definition}

The following defines a distance between an $m$-set and a set of $k$ centers.

\begin{definition}[Variant of Definition 2.1,~\cite{jubran2020sets}]
Let $\term{\REAL^d, D}$ be a metric space, where $D : \mathbb{P}\term{\REAL^d} \times \mathbb{P}\term{\REAL^d} \to [0, \infty)$ be a function that maps every two subsets $P, C \subseteq \REAL^d$ to
\begin{equation}
D\term{P,C} = \min\limits_{p \in P, x \in C} \norm{p-x}_2^2.
\end{equation}
For an integer $k \geq 1$, define $X_k = \br{C \subseteq \REAL^d \middle| \abs{C} = k}$.
\end{definition}

The following defines a coreset for the \emph{sets clustering} problem~\cite{jubran2020sets}.

\begin{definition}[\cite{jubran2020sets}]
\label{def:coresetSets}
Let $n,m, k$ be a triplet of positive integers, $\mathcal{P}$ be an $(n,m)$-set as in Definition~\ref{def:nmset}, and let $\eps,\delta > 0$ denote the approximation error and probability of failure, respectively. $\term{\mathcal{S},v}$ is an $\eps$-coreset, where $\mathcal{S} \subseteq \mathcal{P}$ and $v : \mathcal{S} \to [0, \infty)$ is a weight function, if for every $C \subseteq \mathcal{X}_k$
\begin{equation}
\abs{\sum\limits_{P \in \mathcal{P}} D\term{P, C} - \sum\limits_{Q \in \mathcal{S}} v(Q) D\term{Q, C}} \leq \eps \sum\limits_{P \in \mathcal{P}} D\term{P, C},
\end{equation}
occurs with probability at least $1-\delta$.
\end{definition}

\subsubsection{Prediction of WC}
In our work, we use the scheme in \cite{diamant2020prediction} as cost function for predicting the WC. The prediction is based on calculating a function that links the positions and velocities of the floaters. This function can be linear, in which case the calculation is performed by a weighted least squares; or non-linear, in which case
the estimation involves support vector regression with a non-linear kernel function. Once the relation between the floaters' positions and their velocity is established, the WC's velocity at any given location (within the area explored by the floaters) is evaluated by operating the resulting function over the given location. 

\section{Methodology}
\label{sec:method}

\begin{figure*}[htb!]
    \centering
    \begin{adjustbox}{max height=0.7\textwidth}
    \includegraphics[width=2\textwidth,height=1.5\textheight]{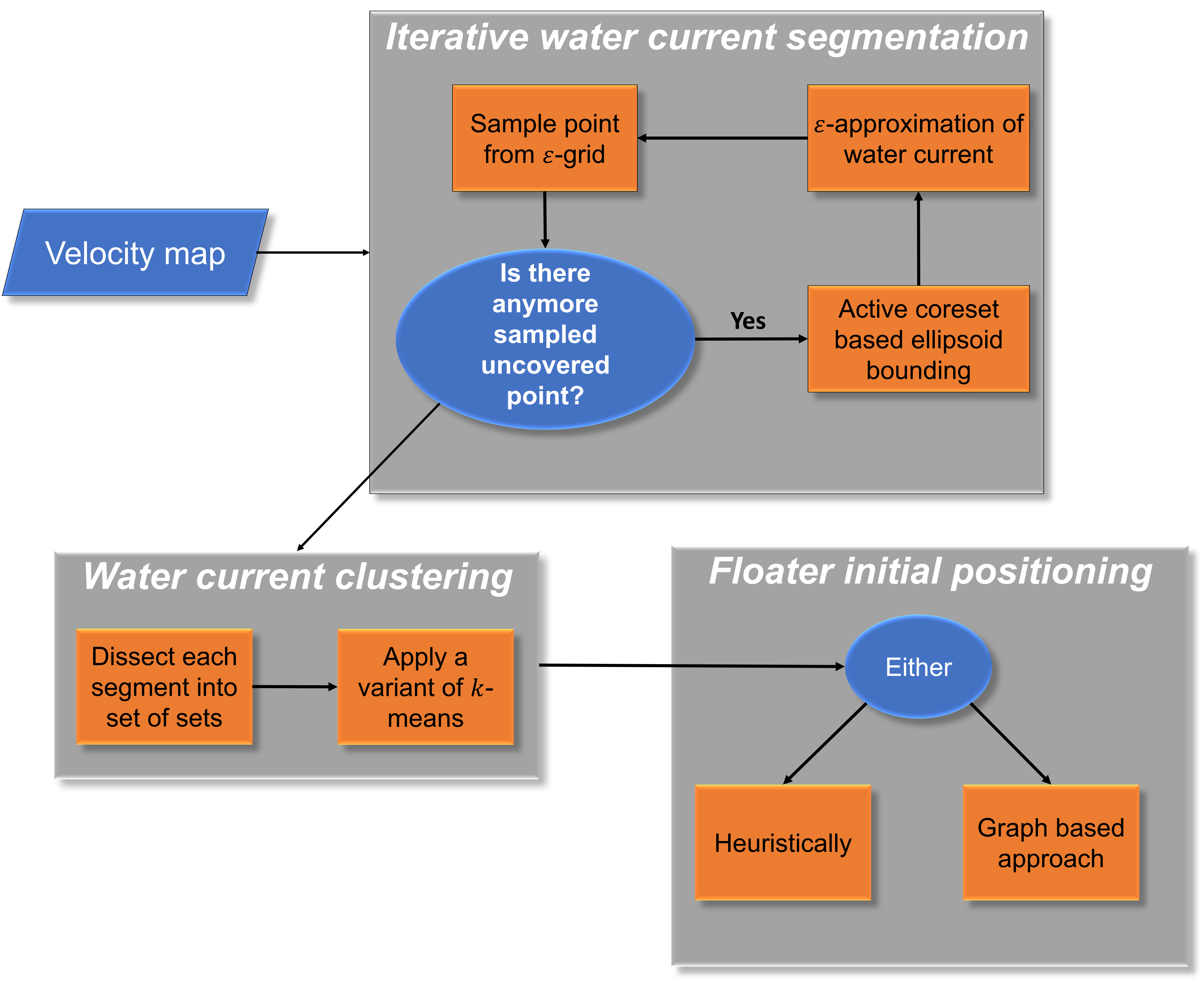}
    \end{adjustbox}
    \caption{A flow chart illustrating our approach for determining the floaters' best deployment position.}
    \label{fig:erd}
\end{figure*}

\subsection{Key idea}

Recall that we are interested in a solution that, given a WC flow field map, how to setup the deployment of a fixed number of Lagrangian floaters such as to best explore the flow field in-situ. The problem of setting the floaters' initial deployment is treated here as maximizing the information gained by the floaters with respect to the actual WC velocity's field. Such a problem can be reduced to the robot coverage path planning problem~\cite{7989583}, which aims to provide full coverage of an explored area while also minimizing the number of repeated visits. In the context of our problem, a variant of the coverage path planning problem is used: Given $K$ robots without the ability to control their movement, and a state space where each state moves the robot to a different state, the goal is to cover as many states as possible while moving through already discovered states as little as possible. This problem can be shown to be \emph{NP-hard}~\cite{zheng2005multi}. However, in our case, the space is not continuous, but rather discrete and bounded by the resolution of the given model. Such a setting simplifies the problem and makes it polynomial in nature (rather than exponential).

The steps of our algorithm are illustrated in the block diagram in Fig.~\ref{fig:erd}, and a toy example is illustrated in Fig.~\ref{fig:toy}. Given a map $\mathcal{M}$ of $M \times N$ velocity vectors forming a snippet of a WC's flow field (see example in Fig.~\ref{fig:toy_input}), the algorithm first partitions $\mathcal{M}$ into segments of homogeneous patches. This is translated into first applying an $\eps$-grid on the map. That is, dissecting the map into a set of $(M/\eps) \times (N/\eps)$ cells (see Fig.~\ref{fig:toy_grid}). Then, from each cell we sample one representative. For each sampled point, we next check whether the point is covered by a segment in which case we proceed to the next sampled point. Otherwise, we find the smallest ellipsoid in volume that encloses a homogeneous patch, including the sampled point. We then obtain an $\eps_\beta$-approximation towards the enclosed patch of the WC in the obtained ellipsoid from the previous step, as illustrated in Fig.~\ref{fig:toy_mapApprox}. The above steps are repeated over the set of sampled points until all points are covered.

As an approach for segmenting the flow field map, $\hat{M}$, a clustering scheme is applied. Each segment is dissected into an $(n,3)$-set (see Definition~\ref{def:nmset}), where $n = \ceil{\frac{\text{size of segment}}{3}}$, such that each point in the $(n,3)$-set is composed of its coordinates on the map $\mathcal{M}$ and the velocity vector that is present at those coordinates. We then normalize the velocity vector such that its norm is equal to the norm of its corresponding coordinates at $\mathcal{M}$. 
In the last stage of clustering, we generate a coreset for the \emph{sets clustering} problem on the merged set of sets $\overline{\mathcal{M}}$ (see Definition~\ref{def:coresetSets}), followed by a variant of the $k$-means algorithm, where $k$ here is equal to the number of floaters. The result is a set of $k$ centers that defines a clustering on $\widehat{\mathcal{M}}$ as shown in Fig.~\ref{fig:toy_clustering}. Finally, based on the clustered $\widehat{\mathcal{M}}$, we determine the deployment position of the $K$ floaters using two main techniques \begin{enumerate*}[label=(\roman*)]
    \item \textit{heuristics}: where the placement locations are chosen as the farthest point in the opposite direction of the dominating direction of each cluster or
    \item \textit{graph-based}: following the longest path formed in the flow filed map.
\end{enumerate*}

We handle the task of clustering by coresets. Coresets are a weighted subset of the input data. They were first introduced in computational geometry as a means to reduce the size of large datasets. Throughout recent years, coresets have been extended and developed for various optimization problems from different fields. One key component associated with coresets is that they aim to encapsulate the hidden structure in the data that the optimization problem at hand entails. Other approaches such as matrix sketches~\cite{woodruff2014sketching} or submodular maximization~\cite{badanidiyuru2014streaming} can also be used for clustering. The advantage of coresets is that it is a subset of the data where the coreset guarantee is satisfied for any query, e.g., any $k$ centers in the context of $k$-means clustering. Such coresets are referred as \say{strong coresets} in the literature~\cite{feldman2020core}.

\begin{figure*}[htb!]
    \centering
    \begin{subfigure}[t]{0.49\linewidth}
         \centering
         \includegraphics[width=\linewidth, height=.7\linewidth]{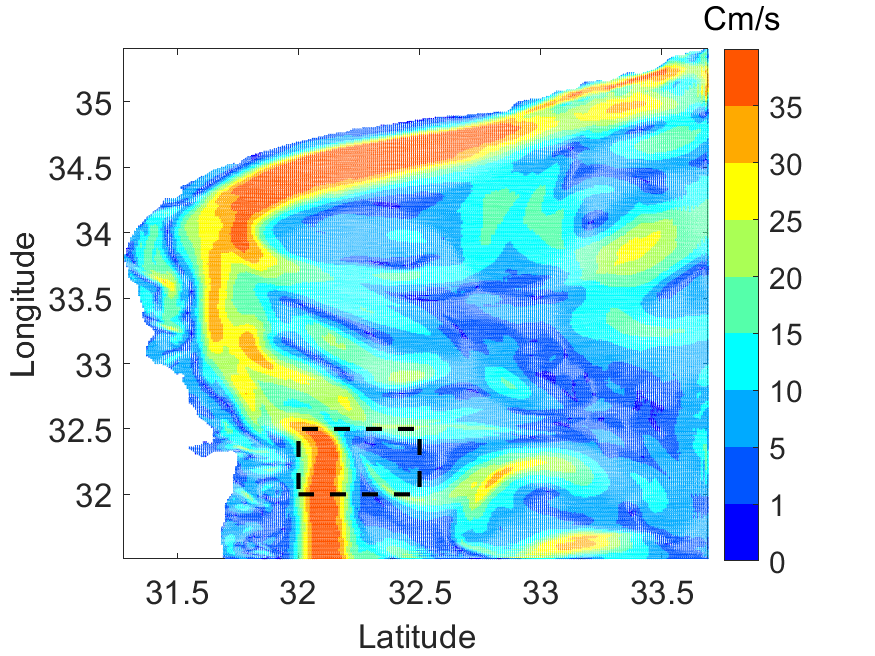}
         \caption{Map of velocities}
         \label{fig:toy_input}
     \end{subfigure}
     \begin{subfigure}[t]{0.49\linewidth}
         \centering
         \includegraphics[width=\linewidth,height=.7\linewidth]{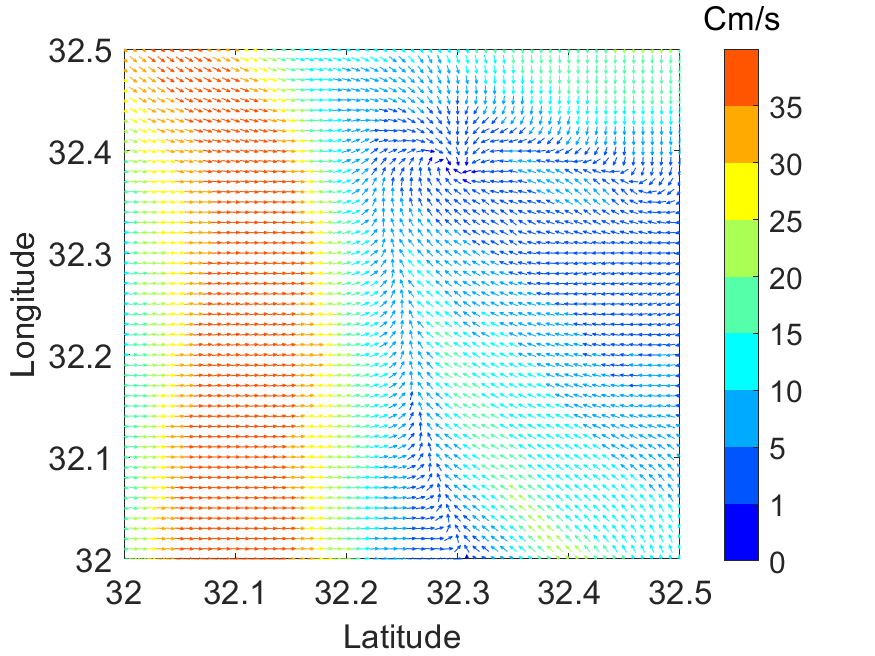}
         \caption{Zoomed area with respect to our toy map.}
         \label{fig:toy_input_zoom}
     \end{subfigure}
     \caption{A \textit{toy} example of a flow field map. The color bar denotes the magnitude of the velocity vectors. Example produced from the SELIPS model~\cite{goldman2015oil}. Fig.~\ref{fig:toy_input_zoom} depicts a zoomed area that is contained in the dotted black rectangle at Fig.~\ref{fig:toy_input}.}
     \label{fig:toy_example_map}
\end{figure*}

\begin{figure*}[htb!]
    \centering
    \begin{subfigure}[t]{0.49\linewidth}
         \centering
         \includegraphics[width=\linewidth,height=.6\linewidth]{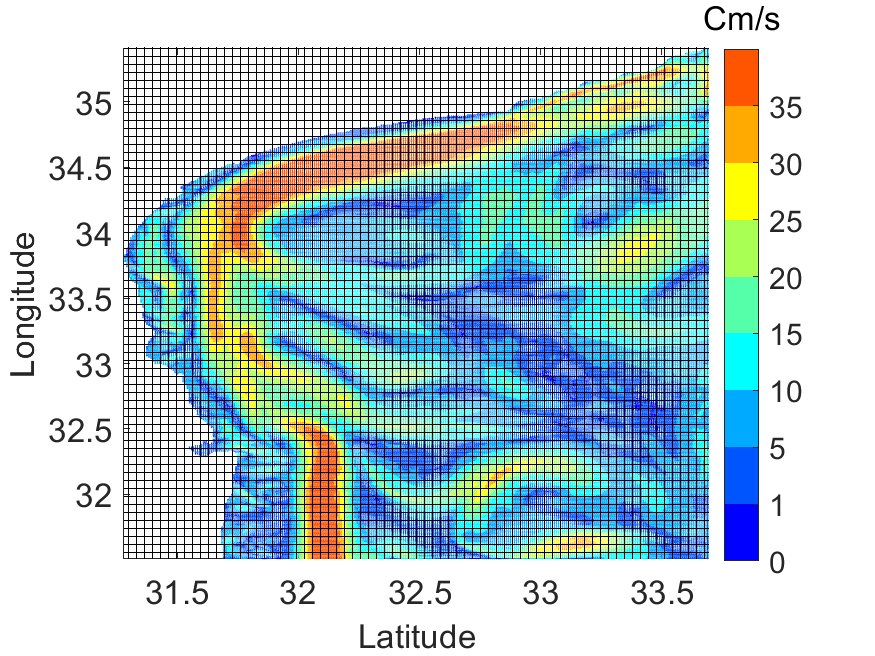}
         \caption{Map partitioning into grids}
         \label{fig:toy_grid}
     \end{subfigure}
     \begin{subfigure}[t]{0.49\linewidth}
         \centering
         \includegraphics[width=\linewidth,height=.62\linewidth]{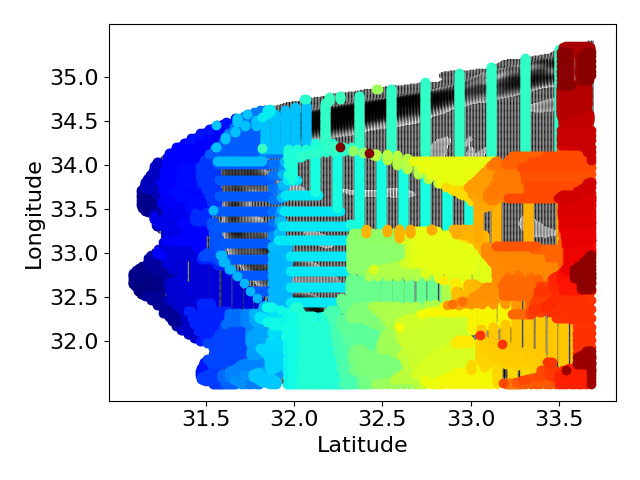}
         \caption{WC segmentation}
         \label{fig:toy_mapApprox}
     \end{subfigure}
     \begin{subfigure}[t]{0.49\linewidth}
         \centering
         \includegraphics[width=\linewidth,height=.62\linewidth]{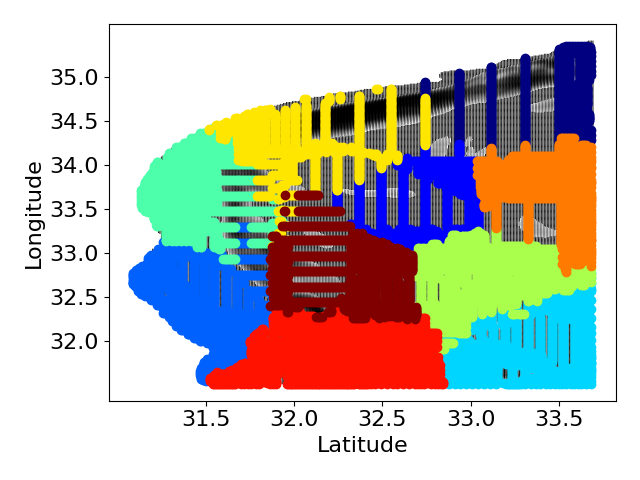}
         \caption{Clustering set of sets}
         \label{fig:toy_clustering}
     \end{subfigure}
    
    \caption{Illustration of running our model on the \textit{toy} example Fig.~\ref{fig:toy_example_map}. Fig.~\ref{fig:toy_grid} presents a partitioning of the flow field such that from each grid cell, a representative is randomly selected. Fig.~\ref{fig:toy_mapApprox} presents our segmentation which entails grouping areas of similar direction and speed. Fig.~\ref{fig:toy_clustering} clusters the segments to ensure that the number of clusters is equal to the number of floaters.}
    \label{fig:toy}
\end{figure*}

\subsection{Our Floater Deployment Scheme}
We formulate our problem as follows. Let $\mathcal{S}$ denote the space of all possible placements. For every $x \in \mathcal{X}$, let $f(x) \in \mathcal{S}$, denote the position of floater $x$. Assume that each $p \in \mathcal{S}$ is associated with a loss function $\phi : \mathcal{S} \to \mathcal{S}$ that maps $p$ to some state $q \in \mathcal{S}$. Finally, let $\mathcal{P}\term{\mathcal{S}}$ denote the power set of $\mathcal{S}$, $\pi : \mathcal{S} \to \mathcal{P}(\mathcal{S})$ denote the path of visited states given the initial state for a floater, $\ell(p(f(x))$ denote the length of the path associated with the floater $x$, and $\mathcal{S}_{i=1}^4$ be a set of orthants of $\mathcal{S}$ such that for each $i,j \in [4]$, $S_i \subseteq S$ and $S_i \cap S_j = \emptyset$ with $i \neq j$. The optimization problem is formalized by
\begin{equation}
\label{eq:optimization_prob}
\begin{split}
\max \quad& \sum\limits_{j=1}^4 \frac{\log{\term{\abs{\term{\bigcup\limits_{x \in \mathcal{X}} p\term{f(x)}} \cap \mathcal{S}_j}}}}{\sum\limits_{x \in \mathcal{X}} \ell(p(f(x))}\\
\mathrm{s.t.} \quad& x \in \mathcal{X}\\
\quad& f(x) \in \mathcal{S}
\end{split}
\end{equation}

In~\eqref{eq:optimization_prob}, the size of the union of different sets (denoted by the absolute function over a set) accounts for each state that is visited by some floater only once, and the loss function forces the solver to find initial states whose path must at least pass through one state from each of the subspaces $\br{\mathcal{S}_i}_{i=1}^4$ of $\mathcal{S}$. The loss function aims to guide the solver to choose placements that doesn't lead to infinite loops.

\subsubsection{Iterative WC segmentation}
Our solution starts by identifying segments in the WC's velocity field. Each segment contains a homogeneous set of WC vectors representing the WC's magnitude and direction. The task is performed by oracle-based algorithms. The data is assumed to be \say{hidden} and only available to the oracle, and the user is allowed to ask the oracle questions with \say{yes/no} responses. Such oracles are known by the term \textit{membership oracles}. The motivation behind such decisions is scalable algorithms for segmentation. Using the oracle-based approach, the emphasis is to segment the map into a set of segments using minimal oracle queries. In addition, such approaches enable the handling of large-scale velocity maps in near-linear time.

In our context, the oracle has the ability to distinguish between different current patches. With such an oracle, we find the minimal volume enclosing ellipsoid (or \textbf{MVEE} in short, see Definition~\ref{def:lownerEllip}) of each WC patch. Specifically, using the given membership oracle, a separation oracle can be constructed in polynomial time. The response of the separation oracle is \say{True} if a point lies inside the body of interest. If the point lies outside the body of interest, the oracle outputs a hyperplane, separating the point from the WC patch. Using the separation oracle, the ellipsoid method~\cite{grotschel1993ellipsoid} can be leveraged to find a $\term{1 + O\term{\eps}}$-approximation for the optimal \textbf{MVEE}. The time complexity of such algorithms is $\bigO{nd^4\log^{\bigO{1}}\term{\frac{d}{\eps}}}$; recall that in our setting, $d \in \bigO{1}$; hence, the time complexity of our algorithm is linear in the number of points $n$. We refer the reader to~\cite{grotschel2012geometric,lee2018efficient,9981428} for an extensive analysis of this method.

Once a WC patch $P$ has been enclosed by an ellipsoid, we proceed to obtain an $\eps_\beta$-approximation towards the volume of $P$, i.e., we aim to find $C \subseteq P$ such that
\begin{equation}
\frac{\vol{\conv{C}}}{\vol{\conv{P}}} \geq 1 - \eps_\beta.
\end{equation}

For this task, we first dissect $P$ to $\vol{P}\eps_\beta^{d}$ cells (see Definition~\ref{def:lownerEllip}). From each cell of this type, we uniformly choose a representative point at random. This ensures that the volume of the set of sampled points approximates the volume of $P$, which in turn approximates the structural properties of $P$~\cite{har2011geometric}.

\subsubsection{Clustering WC}
A fundamental clustering approach is $k$-means, which can also be used here to cluster WC. However, $k$-means will disregard the connectivity between points (segment points). Instead, we use sets clustering~\cite{jubran2020sets}, which is a generalization of $k$-means to cluster dependent sets of points.

Each approximated WC patch is partitioned into a set of triplets based on distance. More specifically, each point is associated with the closest two points to it based on Euclidean distance. The result is a $(3,n)$-set $\set{P}$, where $P\in\set{P}$ is a set of $3$ WC velocity vectors, and $n$ denotes the number of all such sets (see Definition~\ref{def:nmset}). The time complexity for finding a \say{sub-optimal} solution for such clustering is $\bigO{n\log{n} \term{nk}^{dk}}$~\cite{jubran2020sets}, where $n$ denotes the number of sets of points, $k$ denotes the number of desired clusters, and $d$ denotes the dimension of each point in the sets of points. Such a solution is, at most, worse than the optimal solution by a multiplicative factor of $\bigO{\log{n}}$. Leveraging the use of coresets, we can reduce the running time to $n\log{n}k^3 + \term{\frac{\log{n}}{\eps}dk^3}^{\bigO{dk}}$, while maintaining a solution that is associated with an approximation factor of $\bigO{\term{1+\eps}\log{n}}$~\cite{jubran2020sets}. It can be shown that solving the clustering problem on the coreset admits an approximation towards the optimal clustering obtained on all of the data, as follows.

\begin{claim}
Let $\set{P}$ be an $(n,3)$, $\eps \in (0,0.5)$, and $\term{\set{C},w}$ denote its $\eps$-coreset as in Definition~\ref{def:coresetSets}. Let $X_C$ denote the optimal clustering with respect to the coreset $\term{\set{C},w}$ and $X_P$ denote the optimal clustering with respect to $\set{P}$. Then
\[
\sum\limits_{P \in \set{P}} D\term{P,X_C} \in \term{1+O(\eps)}\sum\limits_{P \in \set{P}} D\term{P,X_P}.
\]
\end{claim}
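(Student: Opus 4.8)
The plan is to invoke the standard fact that a \say{strong} coreset transfers near-optimality of the objective, via a short two-sided sandwiching argument; no heavy machinery is needed. For a set of $k$ centers $C$, I would abbreviate the true cost by $\mathrm{cost}\term{\set{P},C}=\sum_{P\in\set{P}}D\term{P,C}$ and the weighted coreset cost by $\mathrm{cost}_w\term{\set{C},C}=\sum_{Q\in\set{C}}w(Q)D\term{Q,C}$. Definition~\ref{def:coresetSets}, applied to this very $C$, says $\abs{\mathrm{cost}\term{\set{P},C}-\mathrm{cost}_w\term{\set{C},C}}\le \eps\,\mathrm{cost}\term{\set{P},C}$, i.e.
\begin{equation*}
\term{1-\eps}\,\mathrm{cost}\term{\set{P},C}\ \le\ \mathrm{cost}_w\term{\set{C},C}\ \le\ \term{1+\eps}\,\mathrm{cost}\term{\set{P},C} .
\end{equation*}
The key point I would stress is that this bound holds for \emph{every} candidate center set $C$ simultaneously — in particular for $X_C$, even though $X_C$ is itself derived from $\set{C}$ — so there is no circularity in substituting $X_C$ or $X_P$ below.

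Next I would chain three inequalities. Starting from the left bound with $C=X_C$, $\mathrm{cost}\term{\set{P},X_C}\le \frac{1}{1-\eps}\mathrm{cost}_w\term{\set{C},X_C}$; then by optimality of $X_C$ with respect to the (weighted) coreset, $\mathrm{cost}_w\term{\set{C},X_C}\le \mathrm{cost}_w\term{\set{C},X_P}$; then by the right bound with $C=X_P$, $\mathrm{cost}_w\term{\set{C},X_P}\le \term{1+\eps}\mathrm{cost}\term{\set{P},X_P}$. Composing these yields $\mathrm{cost}\term{\set{P},X_C}\le \frac{1+\eps}{1-\eps}\mathrm{cost}\term{\set{P},X_P}$. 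The reverse direction is immediate, since $X_P$ minimizes $\mathrm{cost}\term{\set{P},\cdot}$ by definition, giving $\mathrm{cost}\term{\set{P},X_C}\ge \mathrm{cost}\term{\set{P},X_P}$. Finally I would simplify the constant: for $\eps\in(0,0.5)$, $\frac{1+\eps}{1-\eps}=1+\frac{2\eps}{1-\eps}\le 1+4\eps$, so $\mathrm{cost}\term{\set{P},X_C}\in\term{1+O(\eps)}\mathrm{cost}\term{\set{P},X_P}$, as claimed.

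I do not anticipate a real obstacle; the argument is routine. The only points that need care are: (i) checking that Definition~\ref{def:coresetSets} is indeed a \emph{strong} coreset (its approximation guarantee is quantified over all center sets), which is what licenses plugging in the data-dependent optimum $X_C$; (ii) the degenerate case $\mathrm{cost}\term{\set{P},X_P}=0$, where every inequality collapses to an equality and the claim is trivial; and (iii) the fact that if only a sub-optimal solver is run on the coreset — as in the complexity discussion preceding the claim — the same chain still holds with an additional factor equal to that solver's approximation ratio, so the end-to-end guarantee becomes $\bigO{\term{1+\eps}\log n}$ rather than $\term{1+O(\eps)}$.
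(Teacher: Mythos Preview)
Your proposal is correct and follows essentially the same route as the paper: the identical four-step sandwich $\mathrm{cost}(\set{P},X_P)\le\mathrm{cost}(\set{P},X_C)\le\frac{1}{1-\eps}\mathrm{cost}_w(\set{C},X_C)\le\frac{1}{1-\eps}\mathrm{cost}_w(\set{C},X_P)\le\frac{1+\eps}{1-\eps}\mathrm{cost}(\set{P},X_P)$, followed by the same bound $\frac{1+\eps}{1-\eps}\le 1+4\eps$ for $\eps\in(0,0.5)$. Your extra remarks on the strong-coreset quantification, the degenerate zero-cost case, and the propagation of a solver's approximation factor are sound additions not present in the paper's proof.
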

\begin{proof}
Observe that
\begin{equation}
\begin{split}
\sum\limits_{P \in \set{P}} D\term{P,X_P} &\leq \sum\limits_{P \in \set{P}} D\term{P,X_C} \\
&\leq \frac{1}{1-\eps} \sum\limits_{P \in \set{C}} w(P) D\term{P,X_C}\\
&\leq \frac{1}{1-\eps} \sum\limits_{P \in \set{C}} w(P) D\term{P,X_P}\\
&\leq \frac{1+\eps}{1-\eps} \sum\limits_{P \in \set{P}} D\term{P,X_P},    
\end{split}
\end{equation}
where the first inequality holds by definition of $X_P$, the second and last inequality follows from Definition~\ref{def:coresetSets}, and the third inequality holds by definition of $X_C$.

The claim holds since $\frac{1+\eps}{1-\eps} \leq 1+4\eps$ due to the fact that $\eps \in (0,0.5)$.
\end{proof}

Since the input of our algorithm is a discrete map, usually represented via a grid, the input to the clustering must also incorporate the coordinates of each point in any WC patch in the given map. For such a task, to each point $p$ in each triplet $P$, we concatenate the corresponding coordinates in the map, resulting in $\hat{p}$, while the set $P$ is then referred to as $\hat{P}$. To ensure fairness across the two feature vectors that $\hat{p}$ is composed of, we ensure that their norm is roughly equal through scaling. The set of all $\hat{P}$ is then passed to the sets clustering scheme, and the result is treated at the clustering on the original set $\mathcal{P}$.

\subsubsection{Towards optimal floater deployment while seeking maximal coverage}
Once the WC map has been clustered, we determine the deployment position of the floaters to obtain the best WC velocity field estimation.
We consider two types of solutions to the deployment strategy. The first is a heuristic approach, referred to as \textit{heuristic}, where each cluster is assigned a unique floater. The floater's location is then set at the farthest point along the negative of the dominating direction from the cluster, thereby obtaining the longest traversal inside the cluster.
Here, the dominating direction of a cluster refers to the direction that most points in the cluster either point to, or are very close to in terms of cosine similarity. This approach ensures that each cluster is covered, while allowing for additional data collection from the deployment position to the cluster's boundaries. However, the scheme is not optimal for the probable case where the number of clusters is lower than or equal to the number of floaters.

A more rigorous approach would be to employ concepts from graph theory, and we refer to it as \textit{graph-based} approach. Each cluster $S$ is represented as a directed graph $\mathcal{G}_S := \term{V_S, E_S}$, where each point in $S$ is assigned a vertex in $\mathcal{G}_S$. As for the set of edges of $\mathcal{G}_S$, an edge $e := v \to u$ exists in $\mathcal{G}_S$ if $q$ is reachable from $p$ following the velocity vector of $p$, where $p$ and $q$ are the corresponding points in $S$ of that of $v$ and $u$, respectively. In other words, an edge exists if, and only if, \begin{enumerate*}[label=(\roman*)]
    \item one can move from $p$ to $q$ using the velocity vector that is associated with $p$, and 
    \item $q \in B\term{p, 1}$ where $B(x,r)$ denotes a ball centered at $x$, with a radius of $r$.
\end{enumerate*}
At this stage, we have $K$ disconnected graphs for $K$ floaters. For each graph, we compute the longest path from the set of shortest paths between each pair of graph nodes by applying a breadth-first search (BFS) algorithm~\cite{bundy1984breadth} each time from a different node. The running time of this procedure is $O\term{\abs{V_S}^2 + \abs{V_S}\abs{E_S}}$ for any graph $\mathcal{G}_S$. Our above algorithm can be generalized by taking into account weights, in which case, the BFS algorithm is replaced by Johnson's algorithm~\cite{johnson1977efficient}. A modification of this graph-based approach, referred to as the \textit{inter-graph} scheme, connects these graphs by checking whether the roots and leaves of one graph can be connected to another graph. The connectivity is applied to each pair of graphs, and the resulting graph is denoted by $\mathcal{G}_{all}$. The BFS algorithm is then used again to compute the $K$ largest non-intersecting paths in $\mathcal{G}_{all}$. Finally, the floaters' deployment positions are set to be the starting vertices of the selected paths. 

\section{Experimental Analysis}
\label{sec:exp}

In this section, we evaluate the performance of our three strategies for determining the floaters' deployment positions, namely, heuristics, graph-based and inter graph-based. Without alternative benchmark for determining the initial position of the
floats for WC prediction, we compare the performance of our schemes to the common approach of sampling the initial deployment locations uniformly at random. We follow~\cite{diamant2020prediction} to evaluate the performance of the different deployment schemes in terms of the velocity field prediction. For any location $p(x,y)$ in the velocity field, we denote the ground truth velocity vector at $p(x,y)$ by $\begin{pmatrix} v_x\\ v_y\end{pmatrix}$, and the predicted velocity vector at $p(x,y)$ by $\begin{pmatrix} v_x^{\textit{pred}}\\ v_y^{\textit{pred}}\end{pmatrix}$. The latter is obtained by applying the method in \cite{diamant2020prediction}, each time for the different the floaters' trajectories as obtained after
deployment based on the four different deployment strategies. The prediction error is defined by 
\begin{equation}
\label{eq:velocity_error}
\rho_{speed} = \sqrt{\term{v_x - v_x^{\textit{pred}}}^2 + \term{v_y - v_y^{\textit{pred}}}^2}.
\end{equation}

Note that the method in \cite{diamant2020prediction} interpolates the floaters information towards the prediction of the flow field away from the floaters'
trajectories. As such, the prediction error in \eqref{eq:velocity_error} is calculated for each location in the explored area.

\subsection{Experimental Settings}
The method in~\cite{diamant2020prediction} offers a linear and a non-linear prediction models. Here, since we aim for complex flow fields with non-homogeneous
sections, we choose the latter that is based on support vector regression (SVR) model with a radial bases function (RBF) kernel. To train the \emph{RBF-SVR} model, we used a grid-search approach with cross validation~\cite{refaeilzadeh2009cross,claesen2015hyperparameter} to determine the best model parameters. The tuned parameters are \begin{enumerate*}[label=(\roman*)]
    \item $C$ -- a regularization parameters,
    \item $\epsilon$ -- an optimization-related parameter with respect to the SVR model, and
    \item $\gamma$ -- the exponent which controls the deviation of the spread of the radial basis function.
\end{enumerate*} 
For more details, we refer the reader to ``Scikit-Learn''~\cite{scikit-learn}.

As a WC model (and ground truth), we use $48$ WC maps, as produced by the \emph{SELIPS} model~\cite{goldman2015oil} for the Gulf of Haifa, Israel. The maps span over a time period of $12$ months. Model \emph{SELIPS} is an operational forecasting system based on \emph{POM}, a 3D numerical model for the simulation of ocean dynamics, with a horizontal resolution of about $1$ km. The output was given as $3$ hour averages of the velocity components. We explore the results for two options: 1) \textit{clean map}: the WC model is the same as the velocity field used to simulate the drift of the floaters, and 2) \textit{noisy map}: the velocity field used for the simulation is a noisy version of the WC model.
We explore the results for different number of floaters, $K$, and for different model parameters.

\subsection{Experimental Analysis}
\subsubsection{The Toy Example}
We start by showing the performance of each of our proposed deployment strategies on the \textit{toy} example in Fig.~\ref{fig:toy_input}. In Fig.~\ref{fig:toy_result}, we present the empirical cumulative distribution function (CDF) results of the velocity error vector~\eqref{eq:velocity_error}, as generated by predicting the flow field for each point in the map. We observe that the performance of our proposed strategies exceeds that of the uniformly choosing deployment strategy. This indicates that, from a statistical point of view, our method forms better candidates for deployment position strategies than sampling uniformly. Comparing the performance of our three schemes, we conclude that the inter-graph-based approach is better, mostly because of the complexity in the structure of the velocity field, which induces diversity in the WC. The inter-graph approach, which is more rigorous, can better capture this diversity.

\begin{figure}[!htb]
    \centering
    \includegraphics[width=.6\linewidth]{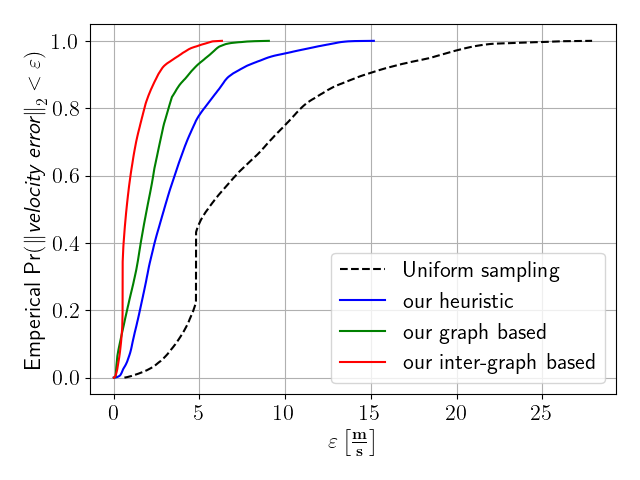}
    \caption{CDF of the norm of velocity error~\eqref{eq:velocity_error}. The results show our advantage upon using uniform sampling for determining deployment positions.}
    \label{fig:toy_result}
\end{figure}

\subsubsection{Choosing the \say{best} parameters for our model}
Our model relies on a predefined set of parameters. Specifically, the approximation error $\eps_\beta$ with respect to the volume of the explored area, which is required for the iterative segmentation stage, and the coreset size $\mu$ with respect to the clustering phase. To explore the sensitivity of the graph-based and inter-graph-based approaches to the choice of these parameters, we use the clean simulation setup for all $48$ WC maps to show that our model is robust against changes in these parameters. As seen in Fig~\ref{fig:params_exp}, best results are associated with $\mu := 1500$ and $\varepsilon_\beta := 0.05$. Still, the difference is rather small, which reflects on the robustness of our approach. In the below we choose $\mu := 1500$ and $\eps_\beta := 0.05$ for both the \textit{clean} maps, for the \textit{noisy} maps. 

\begin{figure*}[!htb]
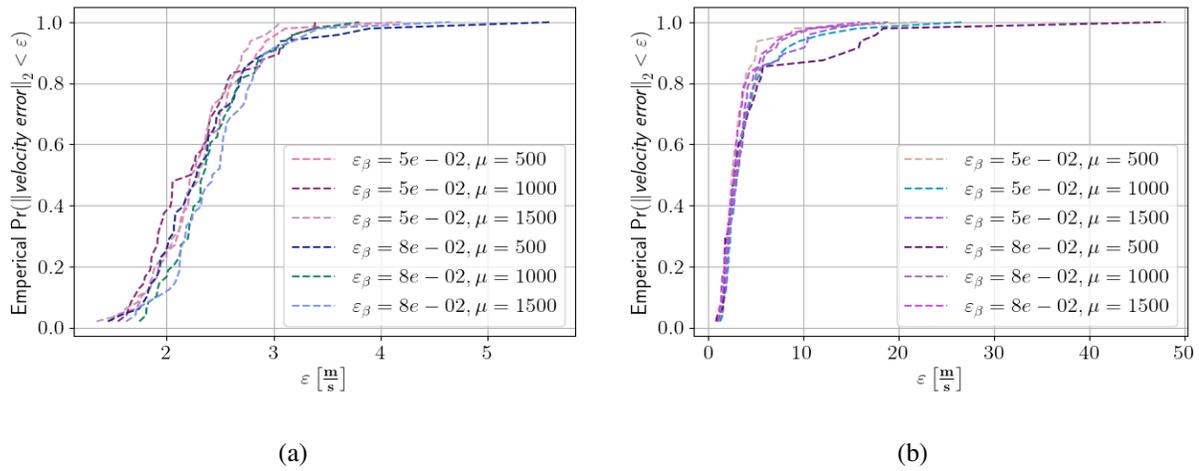

    \centering
    \begin{subfigure}[t]{0.49\linewidth}
         \centering
         \includegraphics[width=\linewidth, height=.7\linewidth]{Uni_against_us_bar_method_param_ourgraph based.png}
         \caption{}
         \label{fig:graph_based_param}
     \end{subfigure}
     \begin{subfigure}[t]{0.49\linewidth}
         \centering
         \includegraphics[width=\linewidth,height=.7\linewidth]{Uni_against_us_bar_method_param_ourinter-graph based.png}
         \caption{}
         \label{fig:inter_graph_based_param}
     \end{subfigure}
    
    \caption{CDF of the averaged prediction error across $48$ \say{clean} maps when using different model parameters with respect to our graph-based approach (left-most), and with respect to our inter-graph based (right-most).}
    \label{fig:params_exp}
\end{figure*}

\subsubsection{Comparison Against Uniform Sampling on a Variety of Maps}
\begin{figure}[htb!]
    \centering
    
   \includegraphics[width=.6\linewidth]{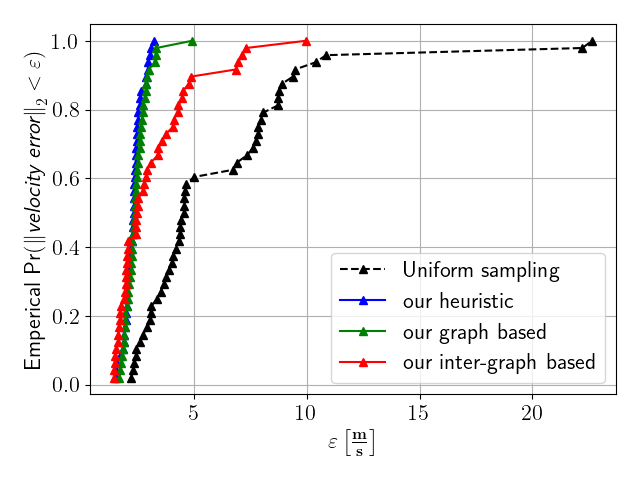}
    \caption{CDF of the averaged prediction error across $48$ maps.}
    \label{fig:across_48}
\end{figure}

We now explore the efficacy of the three schemes for different WC maps in the clean map setup against Uniform sampling. Fig.~\ref{fig:across_48} presents the CDF of the averaged prediction errors across the $48$ maps. We observe that the heuristic-based strategy for deployment outperforms most of the deployment strategies. This is due to the fact that there are no obstacles in each of the $48$ maps. That is an area in the WC flow field with zeroed-out velocity vectors, e.g., an island.
Observe that while the graph-based approach is comparable to the heuristic-based approach, at some point it starts being less efficient. This is due to the fact that the graph-based method needs denser clusters, i.e., the parameter $\eps_\beta$ needs to be smaller leading to larger and denser clusters, e.g., the effect of $\eps_\beta$ is best observed visually in Fig.~\ref{fig:toy_clustering}. In turn, the graphs generated from the clusters hold more information regarding the flow field. This will yield better results than our heuristic-based approach. In addition, the same behavior appears also when observing the inter-graph-based approach.

\subsubsection{Robustness Against Noise}

We explore the performance of the three schemes when the noisy map setup is considered.  
For our toy map $M$, we produce its noisy map $M^\prime$ by adding a Gaussian noise with zero mean, and a standard deviation equal to $\sigma\%$ of the standard deviation of $M$. The added noise is added to a fraction of the WC map, denoted by $\eta \in (0, 1)$, representing the corruption ratio of the WC map.

To generate a difference between the WC map and the velocity field used, we determine the deployment positions of the floaters, $\mathbf{p}(0)$, based on the given WC model (i.e., without the added noise), but calculate the trajectories of the floaters based on the noisy WC map. As a result, the assumed map is mismatched with the noisy one. The effect of $\eta$ and $\sigma$ on our toy example are presented in Table~\ref{tab:noiseEffect}. With a small corruption percent (small $\eta$), we observe that the resulted map is not that different from its clean version; see Figure~\ref{fig:toy_input}. As $\eta$ and $\sigma$ increase, the map loses its underlying structure almost entirely, as depicted at the rightmost lower cell of Table~\ref{tab:noiseEffect}.

\begin{table*}[t!]
    \centering
    \adjustbox{max width=\textwidth}{
    \begin{tabular}{|c||c|c|}
    \hline
    \backslashbox{$\eta\%$}{$\sigma\%$} & $15$ & $133$   
    \\ \hline
    $15$ & \includegraphics[trim=0 0 0 -20, scale = 0.4]{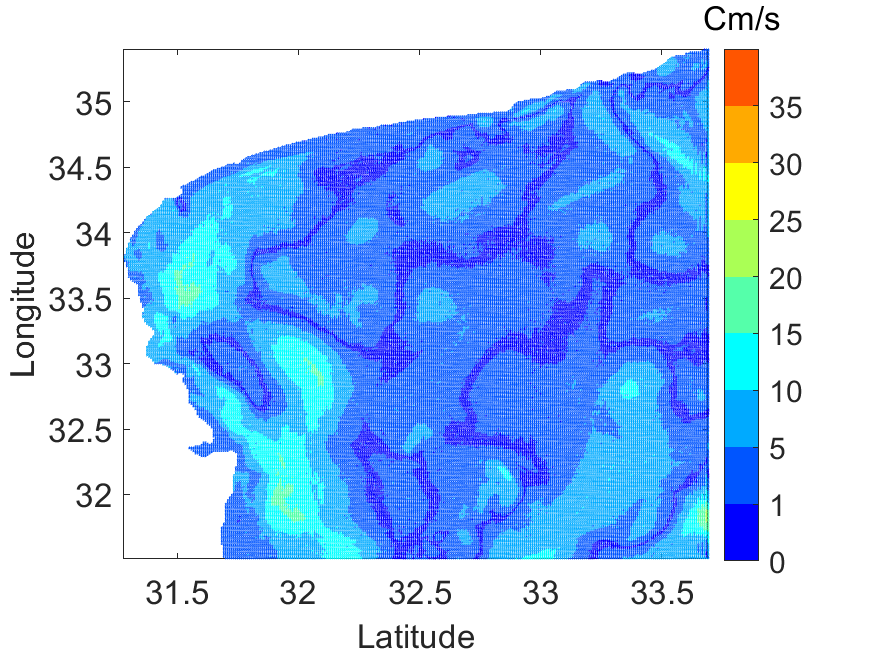} & \includegraphics[trim=0 0 0 -20, scale = 0.4]{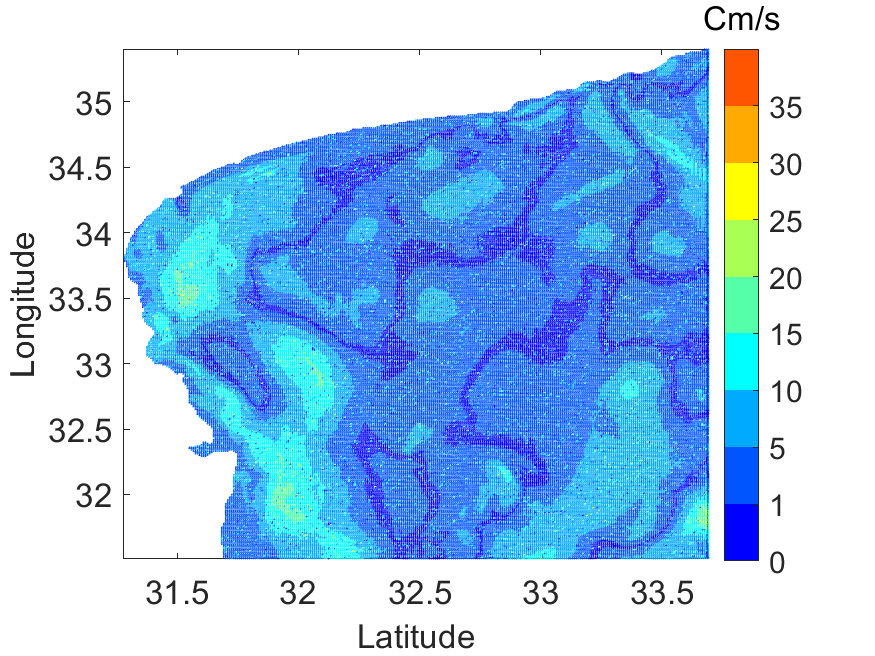}
    \\\hline
    $100$ & \includegraphics[trim=0 0 0 -20, scale = 0.4]{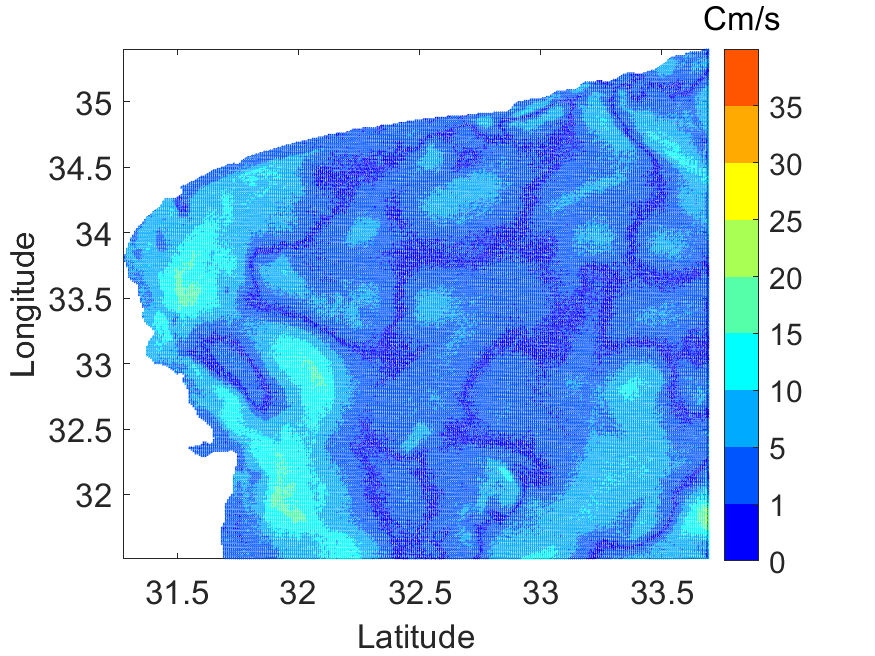} & \includegraphics[trim=0 0 0 -20, scale = 0.4]{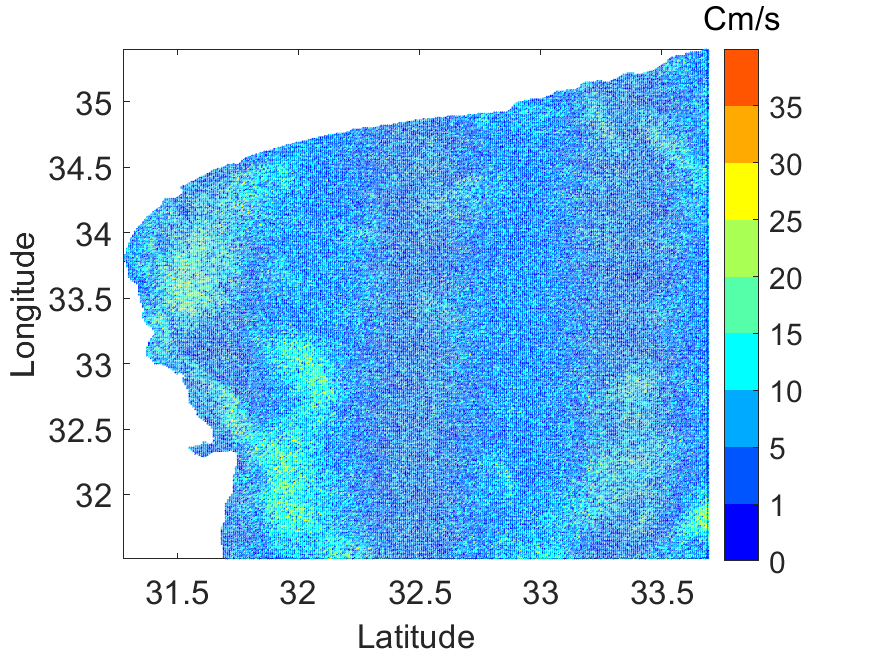}
    \\\hline
    \end{tabular}}
    \caption{The effect of added noise on our toy example.}
    \label{tab:noiseEffect}
\end{table*}

\begin{figure*}[htb!]
    \centering
    \begin{subfigure}[t]{0.325\linewidth}
    \includegraphics[width=\linewidth]{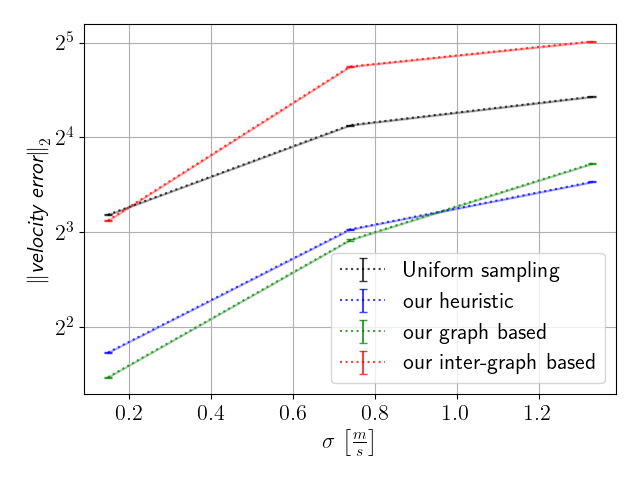}
    \caption{$\eta = 15\%$}
    \label{fig:15Percent}
    \end{subfigure}
    \begin{subfigure}[t]{0.325\linewidth}
    \includegraphics[width=\linewidth]{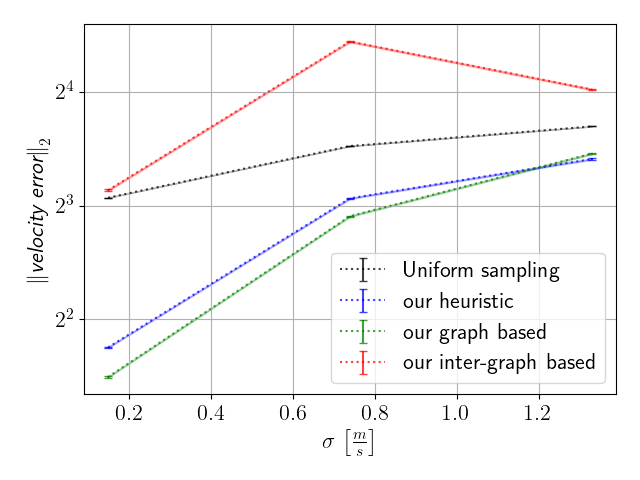}
    \caption{$\eta = 40\%$}
    \label{fig:40Percent}
    \end{subfigure}
    \begin{subfigure}[t]{0.325\linewidth}
    \includegraphics[width=\linewidth]{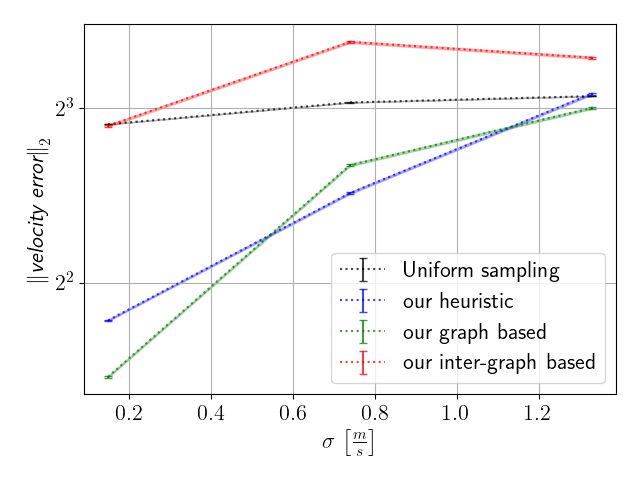}
    \caption{$\eta = 95\%$}
    \label{fig:95Percent}
    \end{subfigure}
    \caption{The averaged norm of the velocity error on our toy map as a function $\sigma$, when using three different corruption percents $\eta \in \br{0.15, 0.4, 0.95}$.}
    \label{fig:NoiseEXP}
\end{figure*}

\begin{figure*}[t!]
    \centering
    \begin{subfigure}[t]{0.49\linewidth}
         \centering
         \includegraphics[width=\linewidth, height=.7\linewidth]{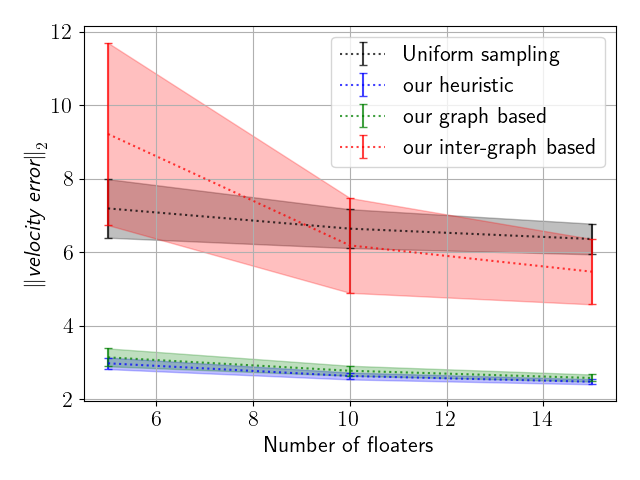}
         \caption{}
         \label{fig:K}
     \end{subfigure}
     \begin{subfigure}[t]{0.49\linewidth}
         \centering
         \includegraphics[width=\linewidth,height=.7\linewidth]{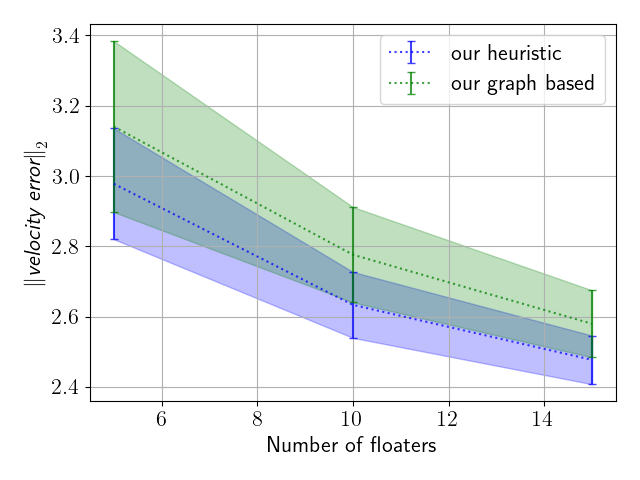}
         \caption{}
         \label{fig:K_zoom}
     \end{subfigure}
    
    \caption{On the left graph, the averaged norm of the velocity error across the $48$ maps as a function of the number of floaters $K$. On the right graph, we zoom in Fig.~\ref{fig:K} showing the averaged norm of the velocity error across the $48$ maps as a function of the number of floaters $K$. In both figures, the shaded regions denote a $95\%$ confidence bar.}
    \label{fig:KRes}
\end{figure*}

\begin{figure*}[t!]
    \centering
    \begin{subfigure}[t]{0.49\linewidth}
         \centering
         \includegraphics[width=\linewidth, height=.7\linewidth]{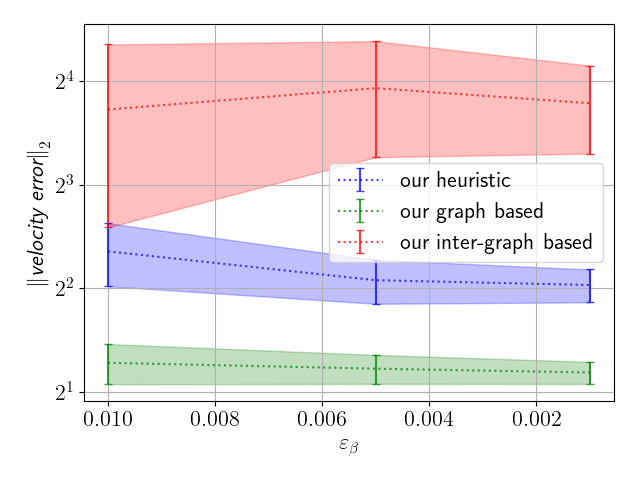}
         \caption{}
         \label{fig:EPSB}
     \end{subfigure}
     \begin{subfigure}[t]{0.49\linewidth}
         \centering
         \includegraphics[width=\linewidth, height=.7\linewidth]{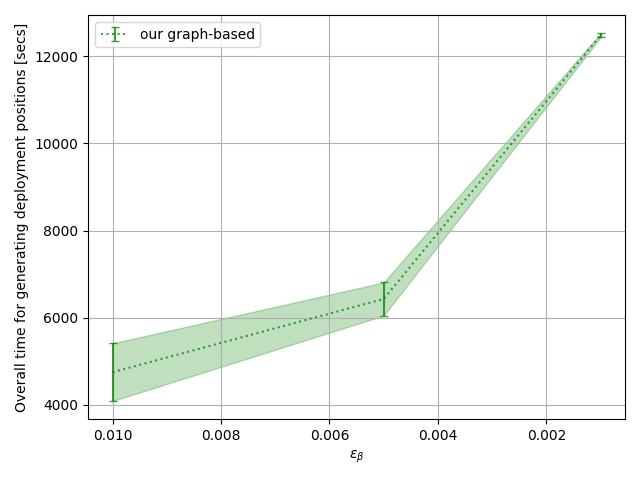}
         \caption{}
         \label{fig:EPSBTime}
     \end{subfigure}
     \caption{On the right, we present the average error of each of our deployment methods with respect to $WC$ reconstruction as a function of $\eps_\beta$. On the left, we present the running time needed to generate the positions for our graph-based approach $\eps_\beta$. Shaded regions denote the standard deviation with respect to the y-axis.}
\end{figure*}

The results are given in Fig.~\ref{fig:NoiseEXP}. We observe that the average error increases with $\sigma\%$. We argue that, as $\sigma\%$ increases, the correlation between the generated deployment positions on the clean map and the resulted noisy map becomes less strong leading to an increase in the average error. The heuristic-based and graph-based approaches still outperform the uniform sampling, but for the inter-graph-based approach which is sensitive to the smoothness of the maps, becomes less efficient in the presence of noise.

\subsubsection{Assessing the effect of $K$} In this experiment, we explore the effect of increasing the number of floaters on the error of reconstructing flow fields. Fig.~\ref{fig:KRes} presents the average error across the $48$ WC maps. We observe that as the number of floaters increases, the average error for each of the $4$ deployment strategies decreases. This is due to the fact that the amount of collective data also increases as more floaters are available, hinging upon a larger discovery of the underlying structure of the flow fields. The results show that graph-based and heuristic-based outperform uniform sampling by at least $225\%$. This is due to the nature of our approaches, which rely on information about the structural properties of the WC maps. On the other hand, the performance of our inter-graph-based approach is sometimes weaker than uniform sampling. This is mainly due to the fact that the former method requires denser graphs, ultimately leading to lower $\eps_\beta$.

\subsubsection{When to use each of our deployment strategies}
Finally, we explore the best setups that fit best each of our deployment strategies.

\paragraph{When accuracy matters more than run-time} Figure~\ref{fig:EPSB} presents the effect of $\eps_\beta$ on each of our proposed strategies. When $\eps_\beta$ decreases, the best strategy is the graph-based approach. It lines well with the observation that this method uses the underline structure of the map. However, the cost of using such low $\eps_\beta$ is reflected in Figure~\ref{fig:EPSBTime}. Using an AMD Ryzen Threadripper $3990$X $2.9$ GHz $64$-Core with $128$GB RAM, the run time increases from minutes to hours. The run time for the heuristic-based approach ranges between $3000$ seconds (at $\eps_\beta=0.01$) and $4500$ seconds (at $\eps_\beta=0.001$), while the run time of the inter-graph-based approach is similar to that of the graph-based approach.

\paragraph{Corrupted maps} We explore the performance of the three schemes when the given model is different than the real channel, i.e., using the noisy map setup. 
For each map $M$ from our set of $48$ flow field maps, we produce its noisy map $M^\prime$ by adding a Gaussian noise with zero mean, and a standard deviation equals to $\sigma\%$ of the standard deviation of $M$. Here the corruption percentage, $\eta$ is $100\%$. The results are given in Fig.~\ref{fig:noiseCorruption100} for different $\sigma^2$ values. We observe that the average error decreases as the added noise increases. This rather non-intuitive result is due to the fact that, as noise increases, the obtained map becomes similar to a Gaussian distributed. Consequently, the distribution of the map's entries can be better estimated from the learning phase, i.e., the path traversed by the floaters. That is, the impact of the floater's initial location becomes less dominant as the noise increases and the mismatch between the model and the actual map increases. That said, since the structure of the noise field still dominates over the added noise, we observe that our inter-graph-based approach is on par with the uniform deployment approach. This is because the former is the least information-collective approach among our three deployment strategies.

\begin{figure}[t!]
    \centering
    \includegraphics[width=.6\linewidth]{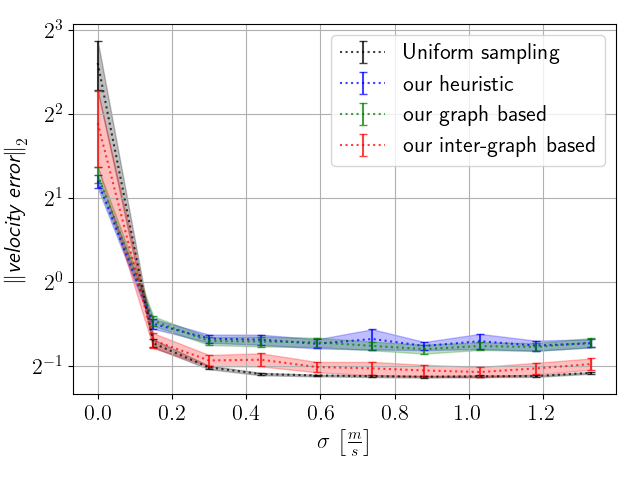}
    \caption{The averaged norm of the velocity error across the $48$ maps as a function $\sigma$, where the corruption percent $\eta$ is $100\%$. The shaded regions denote a $95\%$ confidence bar.}
    \label{fig:noiseCorruption100}
\end{figure}

\section{Conclusions and future work}
\label{sec:conclusions}

In this paper, we explored how to determine the initial deployment positions of a group of floaters to best evaluate the WC flow field.  Our approach relies on clustering a given model of the WC into segments, each of which is represented by a coreset, and determining the floaters' initial deployment positions with the aim of visiting all coresets under constraints: the number of floaters, and the time frame used for evaluation. 
We analyzed the results of our scheme over a database of $48$ WC maps that span over a year of measurements in the Gulf of Haifa, Israel. Compared to the uniform sampling benchmark, the results show that our scheme is more accurate in terms of the WC's prediction, and is more robust to mismatches between the given WC model and the actual one. Future work will identify gaps in the given model and complete them by guiding the floaters to visit these locations.

\section{Acknowledgements}
This work was supported in part by the MOST action for Agriculture, Environment, and Water for the 490 year 2019 (Grant $\#$ 3-16728) and by the the University of Haifa’s Data Science Research Center.

\bibliographystyle{IEEEtran}
\typeout{}
\bibliography{IEEEabrv,refs}

\end{document}